\documentclass{article}

\usepackage{arxiv}

\usepackage[utf8]{inputenc} 
\usepackage[T1]{fontenc}    
\usepackage{hyperref}       
\usepackage{url}            
\usepackage{booktabs}       
\usepackage{amsmath}    
\usepackage{graphicx}
\usepackage{caption}
\usepackage{subcaption}
\usepackage{amsfonts}       
\usepackage{amssymb}    
\usepackage{amsthm}

\newtheorem{lemma}{Lemma}
\newtheorem{theory}{Theory}[section]
\usepackage{enumitem}

\usepackage{nicefrac}       
\usepackage{microtype}      
\usepackage{cleveref}       
\usepackage{lipsum}         
\usepackage{graphicx}
\usepackage{natbib}
\usepackage{doi}

\title{On the Interaction Between Chicken Swarm Rejuvenation and KLD-Adaptive Sampling in Particle Filters}

\newif\ifuniqueAffiliation
\uniqueAffiliationtrue

\ifuniqueAffiliation 
\author{
	\hspace{1mm}Hangshuo Tian \\
	College of Instrumentation and Electrical Engineering\\
	Jilin University\\
	Changchun 130026, China \\
	\texttt{tianhs6523@mails.jlu.edu.cn}
}
\else
\usepackage{authblk}

\setlength{\affilsep}{0em}
\newbox{\orcid}\sbox{\orcid}{\includegraphics[scale=0.06]{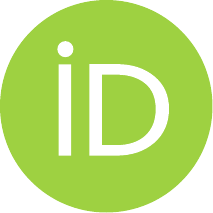}} 
\author[1]{%
	\href{https://orcid.org/0000-0000-0000-0000}{\usebox{\orcid}\hspace{1mm}David S.~Hippocampus\thanks{\texttt{hippo@cs.cranberry-lemon.edu}}}%
}
\author[1,2]{%
	\href{https://orcid.org/0000-0000-0000-0000}{\usebox{\orcid}\hspace{1mm}Elias D.~Striatum\thanks{\texttt{stariate@ee.mount-sheikh.edu}}}%
}
\affil[1]{Department of Computer Science, Cranberry-Lemon University, Pittsburgh, PA 15213}
\affil[2]{Department of Electrical Engineering, Mount-Sheikh University, Santa Narimana, Levand}
\fi


\hypersetup{
	pdftitle={On the Interaction Between Chicken Swarm Rejuvenation and KLD-Adaptive Sampling in Particle Filters},
	pdfauthor={Hangshuo Tian},
	pdfkeywords={Particle Filter, Chicken Swarm Optimization, KLD-Sampling, Karamata's Inequality, Particle Rejuvenation},
}

\begin{document}
	\maketitle
	
	\begin{abstract}
		Particle filters (PFs) are often combined with swarm intelligence (SI) algorithms, such as Chicken Swarm Optimization (CSO), for particle rejuvenation. Separately, Kullback--Leibler divergence (KLD) sampling is a common strategy for adaptively sizing the particle set. However, the theoretical interaction between SI-based rejuvenation kernels and KLD-based adaptive sampling is not yet fully understood. 
		
		This paper investigates this specific interaction. We analyze, under a simplified modeling framework, the effect of the CSO rejuvenation step on the particle set distribution. We propose that the fitness-driven updates inherent in CSO can be approximated as a form of mean-square contraction. This contraction tends to produce a particle distribution that is more concentrated than that of a baseline PF, or in mathematical terms, a distribution that is plausibly more ``peaked'' in a majorization sense. 
		
		By applying Karamata's inequality to the concave function that governs the expected bin occupancy in KLD-sampling, our analysis suggests a connection: under the stated assumptions, the CSO-enhanced PF (CPF) is expected to require a lower \emph{expected} particle count than the standard PF to satisfy the same statistical error bound. The goal of this study is not to provide a fully general proof, but rather to offer a tractable theoretical framework that helps to interpret the computational efficiency empirically observed when combining these techniques, and to provide a starting point for designing more efficient adaptive filters.
	\end{abstract}

	\keywords{Particle Filter (PF) \and Chicken Swarm Optimization (CSO) \and KLD-Sampling \and Karamata's Inequality \and Particle Rejuvenation}

	\section{Introduction}
	Particle filters (PFs), which are widely used in signal processing, navigation, and computer science~\citep{Chen2024CSOsurvey}, are nowadays often combined with various swarm intelligence algorithms. However, choosing an appropriate particle size remains a nontrivial issue in PF, and the problems of sample degeneracy and impoverishment play a central role in almost all PF-based research~\citep{LI20143944}. Fox~\citep{Fox2003KLDsampling} introduced a Kullback--Leibler divergence (KLD) based method to adaptively determine the number of particles, and this KLD-sampling strategy has been successfully applied in PF-based navigation systems~\citep{7102692}. Nevertheless, very few studies have systematically examined how swarm intelligence algorithms influence the behavior and performance of KLD-based adaptive particle sizing.
	
	In this study, we take Chicken Swarm Optimization (CSO) as a representative swarm intelligence algorithm, which has been widely used to enhance particle filters~\citep{Cao2019AntiOcclusionCSOPF,Chen2022HVDDTWCPF}. We focus on the interaction between the CSO rejuvenation kernel and the KLD-sampling method, from both a theoretical and an empirical perspective.
	
	The remainder of this paper is organized as follows.
	Section~\ref{sec:preliminaries} reviews the necessary preliminaries for PF, KLD, and CSO.
	Section~\ref{sec:empirical} presents preliminary empirical phenomena in a simplified 1D model, illustrating the basic interaction.
	Section~\ref{sec:theory} provides the core theoretical analysis, in which CSO is modeled as inducing a mean-square contraction of the particle distribution, which in turn should lead to a more efficient KLD-sampling behavior under our assumptions.
	Section~5 presents an empirical validation on a constant-velocity model with nonlinear observations, where the baseline PF is known to be stable. 
	Section~\ref{sec:discussion} discusses the implications and limitations of these findings.
	Finally, Section~\ref{sec:conclusion} concludes the paper.

	\section{Preliminaries}
	\label{sec:preliminaries}
	
	In this section, we briefly review the Bayesian filtering framework, the bootstrap particle filter (PF), the Kullback--Leibler divergence (KLD) based adaptive sampling strategy, and the Chicken Swarm Optimization (CSO) mechanism that is often integrated into PFs as a rejuvenation step.
	
	\subsection{Bayesian Filtering and Bootstrap Particle Filter}
	
	Consider a state--space model
	\begin{align}
		x_k &= f(x_{k-1}, v_k), \label{eq:state_transition} \\
		z_k &= h(x_k, n_k),       \label{eq:measurement_model}
	\end{align}
	where $x_k \in \mathbb{R}^{d_x}$ is the hidden state, $z_k \in \mathbb{R}^{d_z}$ is the observation, and $v_k$, $n_k$ are process and measurement noises, respectively. The Bayesian filter recursively computes the posterior density $p(x_k \mid z_{1:k})$ via
	\begin{align}
		p(x_k \mid z_{1:k-1})
		&= \int p(x_k \mid x_{k-1}) \, p(x_{k-1} \mid z_{1:k-1}) \, \mathrm{d}x_{k-1}, 
		\label{eq:bayes_prediction} \\
		p(x_k \mid z_{1:k})
		&\propto p(z_k \mid x_k) \, p(x_k \mid z_{1:k-1}), 
		\label{eq:bayes_update}
	\end{align}
	where \eqref{eq:bayes_prediction} is the prediction step and \eqref{eq:bayes_update} is the measurement update step. The integral in \eqref{eq:bayes_prediction} is generally intractable for nonlinear and/or non-Gaussian models.
	
	Particle filtering approximates the posterior $p(x_k \mid z_{1:k})$ with a set of weighted particles
	\[
	\{x_k^{(i)}, w_k^{(i)}\}_{i=1}^{N},
	\qquad
	\sum_{i=1}^{N} w_k^{(i)} = 1,
	\]
	such that
	\begin{equation}
		p(x_k \mid z_{1:k})
		\approx \sum_{i=1}^{N} w_k^{(i)} \, \delta\bigl(x_k - x_k^{(i)}\bigr),
		\label{eq:pf_delta_approx}
	\end{equation}
	where $\delta(\cdot)$ is the Dirac delta function.
	
	In the bootstrap PF, the proposal distribution is chosen as the state transition prior,
	\begin{equation}
		x_k^{(i)} \sim p(x_k \mid x_{k-1}^{(i)}),
		\label{eq:bootstrap_proposal}
	\end{equation}
	which Monte Carlo approximates the predictive density \eqref{eq:bayes_prediction} as
	\begin{equation}
		p(x_k \mid z_{1:k-1})
		\approx \sum_{i=1}^{N} w_{k-1}^{(i)} \, \delta\bigl(x_k - x_k^{(i)}\bigr).
		\label{eq:predictive_delta_approx}
	\end{equation}
	The importance weights are then updated using the likelihood:
	\begin{equation}
		\tilde{w}_k^{(i)}
		= w_{k-1}^{(i)} \, p\bigl(z_k \mid x_k^{(i)}\bigr),
		\qquad
		w_k^{(i)} = \frac{\tilde{w}_k^{(i)}}{\sum_{j=1}^{N} \tilde{w}_k^{(j)}}.
		\label{eq:weight_update}
	\end{equation}
	This realizes the Bayesian update \eqref{eq:bayes_update} in a discrete form, leading to the posterior approximation \eqref{eq:pf_delta_approx}.
	
	However, repeated weight updates and resampling lead to \emph{sample degeneracy}, where only a few particles retain significant weights, and \emph{sample impoverishment}, where resampling destroys particle diversity. These issues motivate both adaptive sample size strategies and rejuvenation mechanisms.
	
	\subsection{KLD-Based Adaptive Particle Sizing}
	
	Let the posterior approximation be represented by a histogram with $k$ non-empty bins. Fox's KLD-sampling method~\cite{Fox2003KLDsampling} determines the number of samples $N$ required so that the Kullback--Leibler divergence between the empirical histogram and the (unknown) true posterior does not exceed a user-specified bound $\varepsilon$ with probability at least $1 - \delta$. The resulting bound is
	\begin{equation}
		N 
		\;\ge\; 
		\frac{1}{2\varepsilon}
		\left(
		k - 1 + \ln \frac{2}{\delta}
		\right).
		\label{eq:kld_bound}
	\end{equation}
	Thus, the required particle size is an increasing function of the number of occupied bins $k$. Intuitively:
	
	if the particle cloud is \emph{narrow} and concentrated, then $k$ is small and a relatively small $N$ suffices;
	if the particle cloud is \emph{broad} and diverse, then $k$ is large and more particles are needed to satisfy the KLD constraint.
	
	Consequently, any algorithm that changes the spread and geometric structure of the particle distribution will influence the particle numbers selected by KLD-sampling.
	
	\subsection{Chicken Swarm Optimization as a Rejuvenation Kernel}
	
	Chicken Swarm Optimization (CSO)~\cite{Chen2024CSOsurvey} is a swarm intelligence algorithm inspired by the hierarchical behavior of chicken flocks. The population is divided into three roles:
	\begin{itemize}
		\item \emph{Roosters}: individuals with high fitness, updated by self-adaptive Gaussian perturbations;
		\item \emph{Hens}: individuals with medium fitness, moving towards roosters and other hens;
		\item \emph{Chicks}: individuals with low fitness, following their mother hens.
	\end{itemize}
	
	When CSO is integrated into a particle filter (resulting in a \emph{CSO-PF} or \emph{CPF}), the particle weights $w_k^{(i)}$ are interpreted as fitness values, and a CSO-based rejuvenation step is applied after the Bayesian update \eqref{eq:weight_update}. Denote by
	\begin{equation}
		x_k^{(i)} \;\leftarrow\; 
		T_{\mathrm{CSO}}\bigl(x_k^{(i)}, \{x_k^{(j)}, w_k^{(j)}\}_{j=1}^{N}\bigr)
		\label{eq:cso_transform}
	\end{equation}
	the stochastic transformation defined by CSO (role assignment and role-dependent update rules).
	
	From a probabilistic viewpoint, this rejuvenation can be seen as applying a Markov kernel $K_{\mathrm{CSO}}$ to the discrete posterior:
	\begin{equation}
		p_{\mathrm{CPF}}(x_k)
		\;=\;
		\sum_{i=1}^{N} w_k^{(i)} \,
		K_{\mathrm{CSO}}\bigl(x_k \mid x_k^{(i)}\bigr),
		\label{eq:cso_kernel_smoothing}
	\end{equation}
	where $K_{\mathrm{CSO}}$ depends on whether the particle is assigned as a rooster, hen, or chick. Compared to the pure delta-mixture form in \eqref{eq:pf_delta_approx}, \eqref{eq:cso_kernel_smoothing} represents a \emph{kernel-smoothed} posterior approximation.
	
	This kernel smoothing has two effects that are central to this work:
	\begin{enumerate}
		\item it can increase particle diversity while preserving the main posterior structure;
		\item it alters the occupancy pattern of histogram bins, thereby affecting the value of $k$ in \eqref{eq:kld_bound} and implicitly changing the particle numbers chosen by KLD-sampling.
	\end{enumerate}
	Understanding this interaction between CSO-based rejuvenation and KLD-based adaptive sampling is the main focus of the subsequent sections.

	\section{Empirical Observation of KLD Behavior under CSO-Based Rejuvenation}
	\label{sec:empirical}
	
	In this section, we present a simple numerical experiment to empirically observe how CSO-based rejuvenation influences the behavior of KLD-sampling in particle filters.
	
	\paragraph{Experimental setup.}
	We consider a one-dimensional linear Markov process:
	\begin{align}
		x_n &= x_{n-1} + \varepsilon_{1,n}, \label{eq:exp_state}\\
		z_n &= x_n + \varepsilon_{2,n},       \label{eq:exp_obs}
	\end{align}
	where $x_n$ denotes the hidden state at time $n$ and $z_n$ denotes the corresponding observation. 
	The process noise $\varepsilon_{1,n}$ and the measurement noise $\varepsilon_{2,n}$ are assumed to be independent zero-mean Gaussian noises:
	\begin{equation}
		\varepsilon_{1,n} \sim \mathcal{N}(0, \sigma_1^2),
		\qquad
		\varepsilon_{2,n} \sim \mathcal{N}(0, \sigma_2^2),
		\qquad
		\sigma_1 = 0.5,\; \sigma_2 = 1.
		\label{eq:exp_noise}
	\end{equation}
	
	A qualitative illustration of the CSO-based rejuvenation step in CPF is shown in Fig.~\ref{fig:CPF}.
	\begin{figure}[!h]
		\centering
		\includegraphics[width=0.55\textwidth]{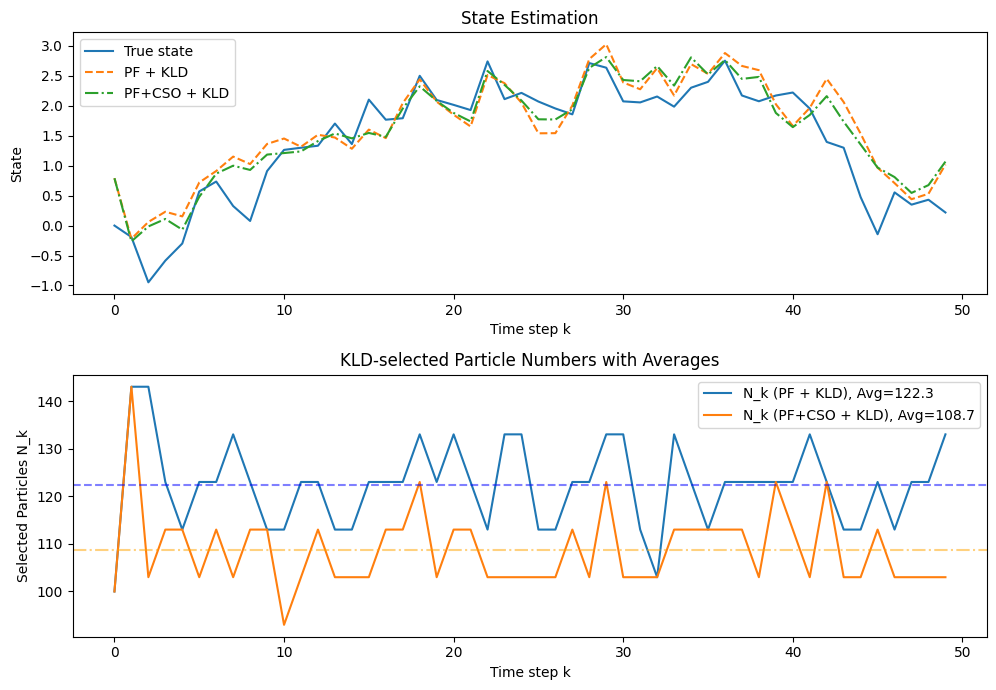}
		\caption{Illustration of the CSO-based rejuvenation step in CPF.}
		\label{fig:CPF}
	\end{figure}
	The empirical result suggests that there is a noticeable reduction in the particle number selected by the KLD rule when CSO-based rejuvenation is applied.
	
	To examine whether the observed KLD-reduction effect induced by CSO is robust across different noise settings, 
	we perform a noise-level sensitivity experiment. The process noise standard deviation $\sigma_1$ and the 
	measurement noise standard deviation $\sigma_2$ are independently varied while keeping the other fixed. 
	For each noise configuration, multiple Monte Carlo trials are conducted, and the time-averaged particle 
	numbers selected by the KLD rule are recorded for both the baseline PF and the CPF. 
	This setup allows us to study how the phenomenon behaves as the underlying stochastic model changes. The results are shown in Fig.~\ref{fig:CPF2}.
	
	\begin{figure}[htbp]
		\centering
		
		\begin{subfigure}[b]{0.47\textwidth}
			\centering
			\includegraphics[width=\textwidth]{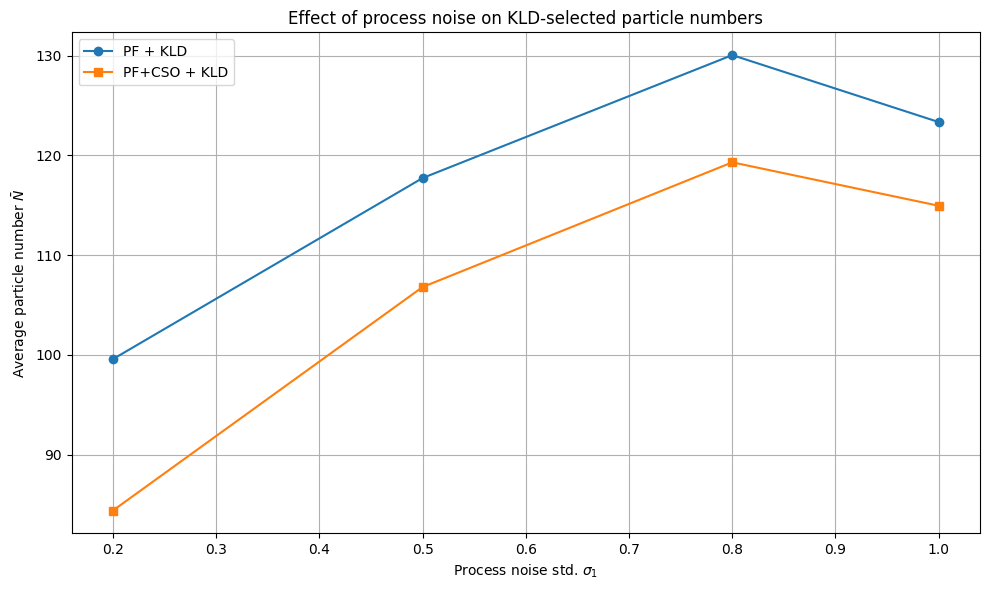}
			\caption{Average particle numbers vs. process noise $\sigma_1$.}
			\label{fig:cpf_sigma1}
		\end{subfigure}
		\hfill
		\begin{subfigure}[b]{0.47\textwidth}
			\centering
			\includegraphics[width=\textwidth]{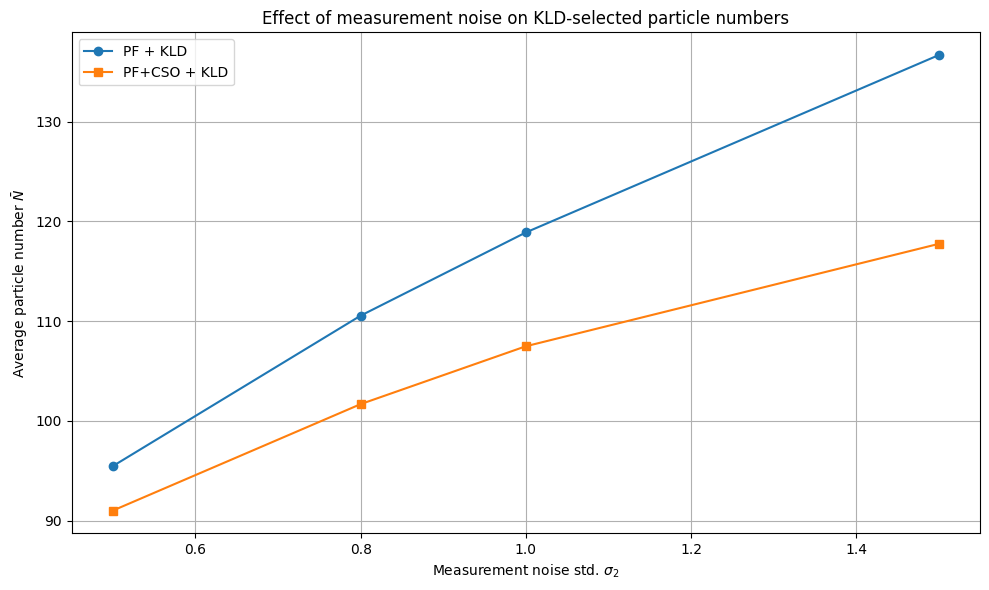}
			\caption{Average particle numbers vs. measurement noise $\sigma_2$.}
			\label{fig:cpf_sigma2}
		\end{subfigure}
		
		\caption{Effect of noise levels on the KLD-selected particle numbers for PF and CPF. 
			Subfigure (a) shows the dependence on the process noise, whereas subfigure (b) 
			illustrates the dependence on the measurement noise.}
		\label{fig:CPF2}
	\end{figure}
	As shown in Figs.~\ref{fig:cpf_sigma1} and~\ref{fig:cpf_sigma2}, we report the
	average number of particles selected by the KLD-sampling rule under the
	baseline PF and the PF+CSO (CPF) when the process noise standard deviation
	$\sigma_1$ and the measurement noise standard deviation $\sigma_2$ are
	independently varied. Several empirical patterns can be observed.
	
	First, in both experiments, increasing either $\sigma_1$ or $\sigma_2$
	leads to an increase in the KLD-selected particle number for both PF and
	CPF, which is consistent with the intuition that higher noise levels make the filtering problem harder.
	
	More importantly, across all tested noise settings, CPF consistently
	selects fewer particles than the baseline PF. The reduction is stable and
	appears in every noise configuration in both the $\sigma_1$- and
	$\sigma_2$-sweeps. Although the magnitude of the difference varies with
	the noise levels, the direction of the effect remains unchanged. In the next section, we provide a theoretical interpretation of this behavior under a set of simplifying assumptions.

	\section{Theoretical Analysis of CPF and KLD-Sampling}
	\label{sec:theory}
	
	In this section, we provide a mathematical argument for why the chicken particle filter (CPF) can require fewer particles than the baseline particle filter (PF) under Kullback--Leibler divergence (KLD)-sampling. The argument can be broken down into two key steps:
	
	\begin{enumerate}
		\item On the state-space level, the CSO-based rejuvenation can be modeled as inducing a mean-square contraction of ``bad'' particles towards a high-posterior region.
		\item On the histogram level, a more concentrated particle distribution occupies fewer bins in expectation, and therefore tends to lead to a smaller KLD-selected sample size.
	\end{enumerate}
	
	All technical proofs for the CSO roles (rooster, hen, chick) are collected in Appendix~\ref{app:CSOroles}, and the details of the histogram argument are given in Appendix~\ref{app:histogram}.
	\begin{figure}[!h]
		\centering
		\includegraphics[width=0.8\textwidth]{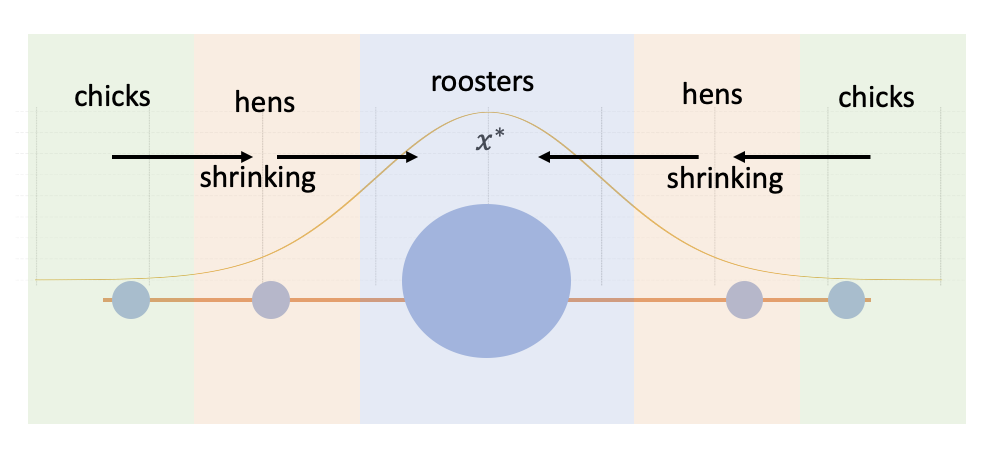}
		\caption{Role-based particle contraction in CPF.}
		\label{fig:particle_contraction}
	\end{figure}

	\subsection{Setup and Notation}
	
	For clarity, we consider a fixed time step \( k \) and a one-dimensional state space. Let \( x^\star \in \mathbb{R} \) denote a \emph{reference point} representing the \emph{high-posterior region} at time \( k \). This reference point could be the weighted mean or mode of the filtering posterior.
	
	\paragraph{Displacement.}
	For a particle located at position \( x \in \mathbb{R} \), we define its \emph{displacement} from \( x^\star \) as:
	\[
	d := x - x^\star.
	\]
	For a follower particle (hen or chick) at position \( x_i \), and its leaders \( x_r \) and \( x_h \) (rooster and hen leader, respectively), we define:
	\[
	d_i := x_i - x^\star,\quad
	d_r := x_r - x^\star,\quad
	d_h := x_h - x^\star.
	\]
	
	\paragraph{Histogram and occupied bins.}
	We fix a bin width \( h > 0 \) and define a family of non-overlapping bins:
	\[
	B_j := [x^\star + jh, x^\star + (j+1)h), \quad j \in \mathbb{Z}.
	\]
	Let \( \mu \) be the probability distribution of particles before KLD-sampling at time \( k \), and let
	\[
	p_j(\mu) := \mu(B_j), \quad j \in \mathbb{Z},
	\]
	represent the corresponding bin probabilities. Given \( N \) i.i.d. particles \( X^{(1)}, \dots, X^{(N)} \sim \mu \), denote by \( N_j \) the number of particles in bin \( B_j \), and define the number of \emph{occupied bins} as:
	\[
	K(\mu, N) := \sum_{j \in \mathbb{Z}} \mathbf{1}\{N_j > 0\}.
	\]
	The KLD-sampling rule then sets the required particle number as a monotone function \( N_k = F(K; \varepsilon, \delta) \) of \( K \), where \( (\varepsilon, \delta) \) are the prescribed KLD parameters \cite{Fox2003KLDsampling}.

	\subsection{Mean-Square Contraction of CSO-Based Rejuvenation}
	
	The CSO rejuvenation step affects three types of particles: roosters, hens, and chicks. We model each type using a stochastic update kernel and study its effect on the squared displacement \( |d|^2 \).
	
	\subsubsection{Roosters}
	
	For roosters, we assume a zero-mean jitter model.
	
	\begin{theory}[Rooster update]
		\label{th:rooster}
		If a particle is a rooster at position \( x_r \), its updated position is
		\[
		X_r' = x_r + \eta_r,
		\]
		where \( \eta_r \) is a zero-mean random variable with finite variance:
		\[
		\mathbb{E}[\eta_r \mid X_r = x_r] = 0, \qquad \mathbb{E}[\eta_r^2 \mid X_r = x_r] = \sigma_r^2(x_r) < \infty.
		\]
	\end{theory}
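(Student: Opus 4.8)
The plan is to establish the stated representation by showing that the canonical CSO rooster update is itself an instance of the posited additive zero-mean, finite-variance kernel; that is, I would exhibit $\eta_r$ explicitly and verify its two moment conditions rather than taking them as given. In the standard CSO formulation the rooster evolves multiplicatively,
\[
X_r' = x_r\bigl(1 + \xi\bigr), \qquad \xi \sim \mathcal{N}\bigl(0,\sigma^2\bigr),
\]
where the variance is fitness-adaptive and, in the usual parameterization,
\[
\sigma^2 =
\begin{cases}
1, & \text{if } f_r \le f_k,\\
\exp\!\left(\dfrac{f_k - f_r}{|f_r| + \epsilon}\right), & \text{if } f_r > f_k,
\end{cases}
\]
with $f_r$ the rooster's fitness, $f_k$ the fitness of a competitor rooster drawn uniformly at random from the remaining roosters, and $\epsilon > 0$ a small regularizer. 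The first step is purely algebraic: setting $\eta_r := x_r\xi$ rewrites the multiplicative rule as $X_r' = x_r + \eta_r$, precisely the additive form claimed.

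Next I would verify the two moment conditions. For the mean, conditioning on $X_r = x_r$ fixes the current configuration, so the only residual randomness is the competitor index $k$ and the Gaussian draw $\xi$; since $\xi$ is centered for every admissible $k$, the tower property gives $\mathbb{E}[\xi \mid X_r = x_r] = \mathbb{E}_k\bigl[\mathbb{E}[\xi \mid k]\bigr] = 0$, whence $\mathbb{E}[\eta_r \mid X_r = x_r] = x_r\,\mathbb{E}[\xi \mid X_r = x_r] = 0$ by linearity. For the variance I would compute $\mathbb{E}[\eta_r^2 \mid X_r = x_r] = x_r^2\,\mathbb{E}[\xi^2 \mid X_r = x_r] = x_r^2\,\mathbb{E}_k[\sigma^2]$. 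The key structural observation is that every admissible value of $\sigma^2$ lies in $(0,1]$: it equals $1$ in the first branch, and in the second branch it is the exponential of a nonpositive exponent. Hence $\mathbb{E}_k[\sigma^2] \le 1$, and defining $\sigma_r^2(x_r) := x_r^2\,\mathbb{E}_k[\sigma^2]$ yields $\sigma_r^2(x_r) \le x_r^2 < \infty$ for every finite $x_r$, which is the finite-variance claim.

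The step I expect to require the most care is reconciling the conditioning. The statement conditions on $X_r = x_r$ alone, yet $\sigma^2$ depends on the full fitness configuration and on the random competitor $k$. The clean resolution is to average over $k$, as above, and to absorb any residual dependence on the rest of the configuration into the function $\sigma_r^2(\cdot)$, which the simplified modeling framework of this section explicitly permits. Crucially, this averaging does not threaten either conclusion: the centering of $\xi$ is preserved under any mixing over $k$, so the zero-mean property is unconditional, while the uniform bound $\sigma^2 \le 1$ holds configuration-by-configuration and therefore survives the expectation, keeping the variance finite. This confirms that the canonical rooster kernel realizes the posited zero-mean, finite-variance perturbation model, which is exactly what the statement asserts.
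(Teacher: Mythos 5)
Your proposal is correct as mathematics, but it proves something genuinely different from what the paper's appendix proves, and the comparison is instructive. The paper treats Theory~\ref{th:rooster} purely as a modeling assumption: its ``proof'' takes the additive form $X_r' = x_r + \eta_r$ as given, writes $D_r' = d_r + \eta_r$, and expands the conditional moments to obtain $\mathbb{E}[D_r' \mid X_r = x_r] = d_r$ and $\mathbb{E}[|D_r'|^2 \mid X_r = x_r] = |d_r|^2 + \sigma_r^2(x_r)$, i.e.\ the no-drift-plus-bounded-variance conclusion that Lemma~\ref{lem:global-ms} later consumes. You instead ground the assumption itself: you exhibit the canonical multiplicative CSO rooster rule $X_r' = x_r(1+\xi)$, set $\eta_r := x_r\xi$, and verify the two moment conditions directly, using the observation that the fitness-adaptive variance lies in $(0,1]$ in both branches so the mixture over the competitor index stays centered and bounded. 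That verification is sound, and it buys something the paper leaves implicit, namely that the abstract kernel is actually realized by the standard algorithm rather than merely postulated. Two caveats. First, to fully replace the paper's appendix argument you would still need the (one-line) displacement computation, since the downstream lemmas use the moments of $D_r'$, not of $\eta_r$. Second, and more substantively, your instantiation gives $\sigma_r^2(x_r) = x_r^2\,\mathbb{E}_k[\sigma^2] \le x_r^2$: finite for each fixed $x_r$, as the statement requires, but growing quadratically in $|x_r|$ and anchored to the origin rather than to $x^\star$. Lemma~\ref{lem:global-ms} explicitly assumes the rooster variance is \emph{uniformly} bounded ($\sigma_r^2 \le B_r$), so under your concrete grounding that lemma would need an additional hypothesis (e.g.\ roosters confined to a bounded region, or a truncated perturbation), whereas the paper's abstract $\sigma_r^2(x_r)$ permits uniform boundedness by fiat. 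Flagging this tension would strengthen, not weaken, your argument.
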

	
	Under Theorem~\ref{th:rooster}, the displacement \( D_r' := X_r' - x^\star \) satisfies (see Appendix~\ref{app:CSOroles}):
	\[
	\mathbb{E}[D_r' \mid X_r = x_r] = d_r, \quad \mathbb{E}[|D_r'|^2 \mid X_r = x_r] = |d_r|^2 + \sigma_r^2(x_r).
	\]
	That is, rooster updates do not induce a systematic drift away from \( x^\star \); they merely add a bounded variance term.
	
	\subsubsection{Hens}
	
	For hens, we use the standard CSO update:
	\begin{equation}
		X_i' = x_i + S_1 r_1 (x_r - x_i) + S_2 r_2 (x_h - x_i),
		\label{eq:hen-update-main}
	\end{equation}
	where \( r_1, r_2 \in [0, 1] \) are independent random variables, and \( S_1, S_2 \) are step-size coefficients determined from fitness differences \cite{Meng2014CSO, Chen2022HVDDTWCPF}.
	
	We impose a geometric alignment and bounded step-size condition.
	
	\begin{theory}[Hen alignment and step size]
		\label{th:hen}
		There exist constants \( R_0 > 0 \), \( \alpha \in (0, 1) \), and \( L_1, L_2 > 0 \) such that the following holds for any hen particle:
		\begin{enumerate}[label=(\roman*)]
			\item \textbf{Leader alignment and proximity.} \\
			If \( |d_i| \ge R_0 \), then the leaders satisfy
			\[
			d_r = c_r d_i, \quad d_h = c_h d_i,
			\]
			for some coefficients \( c_r, c_h \in [0, \alpha] \).
			
			\item \textbf{Bounded step size.} \\
			The step-size coefficients obey \( 0 \le S_1 \le L_1 \), \( 0 \le S_2 \le L_2 \), and there exists \( \gamma \in (0, 1) \) such that
			\[
			L_1 + L_2 \ge \gamma.
			\]
		\end{enumerate}
	\end{theory}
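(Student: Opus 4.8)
The plan is to treat Theorem~\ref{th:hen} as a structural existence claim: the constants $R_0,\alpha,L_1,L_2,\gamma$ are meant to encode the fact that CSO's fitness-ranked role assignment pulls a far-out hen toward the high-posterior region $x^\star$. Accordingly, I would establish the two clauses by separate arguments, and then---exactly as the rooster moment computation follows Theorem~\ref{th:rooster}---use them to read off the mean-square contraction that the theorem is designed to support.

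For clause (i), I would work under the standing simplification that the filtering posterior near $x^\star$ is unimodal with fitness (normalized weight) monotonically decreasing in $|d|$. Because roosters are the highest-fitness individuals and a hen's designated hen-leader is a strictly fitter hen, both leaders lie in a high-fitness core clustered around $x^\star$; hence there is a radius $R_0$ such that once $|d_i|\ge R_0$ the follower sits in the low-fitness tail while its leaders sit well inside, giving $|d_r|,|d_h|\le \alpha|d_i|$ for some $\alpha\in(0,1)$. Writing $c_r := d_r/d_i$ and $c_h := d_h/d_i$ then yields the stated proportional form, and the admissible range $c_r,c_h\in[0,\alpha]$ captures the \emph{same-side} geometry: for a follower deep in one tail, the dominant high-fitness mass (and therefore the selected leaders) sits between it and $x^\star$ rather than across the mode. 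Clause (ii) I would dispatch by direct substitution: the CSO step-size coefficients $S_1,S_2$ are defined as bounded (typically exponential) functions of normalized fitness gaps, so they are automatically confined to compact intervals $[0,L_1]$ and $[0,L_2]$, while the bound $L_1+L_2\ge\gamma$ merely asserts that these caps are not simultaneously degenerate, so that the pull toward $x^\star$ does not vanish.

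With (i) and (ii) in hand, the intended payoff is immediate. Substituting $d_r=c_r d_i$ and $d_h=c_h d_i$ into the hen update~\eqref{eq:hen-update-main} collapses the displacement to the multiplicative form $D_i' = A\,d_i$ with $A = 1 - S_1 r_1(1-c_r) - S_2 r_2(1-c_h)$, so that $\mathbb{E}[|D_i'|^2\mid X_i=x_i] = \mathbb{E}[A^2]\,|d_i|^2$. Because $1-c_r,1-c_h\ge 1-\alpha>0$ and the random factors $r_1,r_2$ are spread over $[0,1]$, averaging over the update noise drives $\mathbb{E}[A^2]$ strictly below $1$, delivering a contraction ratio $\rho<1$ on the region $|d_i|\ge R_0$.

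The main obstacle, I expect, is that clause (i) is genuinely an idealization rather than a statement provable from the CSO recursion alone: exact proportionality $d_r=c_r d_i$ and, more delicately, the sign constraint $c_r,c_h\ge 0$ can fail when a leader lands on the far side of $x^\star$, which is precisely the configuration that would let the update \emph{overshoot} and inflate $A^2$ past $1$. Controlling this requires either promoting the unimodal/same-side hypothesis to an explicit assumption (so that $A^2<1$ is guaranteed rather than merely typical), or complementing the lower bound in (ii) with an upper bound on $S_1+S_2$ that keeps $B:=S_1 r_1(1-c_r)+S_2 r_2(1-c_h)$ within $(0,2)$, together with a bound showing that any residual overshoot events contribute negligibly to $\mathbb{E}[A^2]$. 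I would therefore flag this as the step where the argument trades full generality for the tractable framework the paper advertises.
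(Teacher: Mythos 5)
Your computational core is the same as the paper's: the appendix ``proof'' of Theorem~\ref{th:hen} likewise never argues for clauses (i)--(ii) themselves but substitutes the alignment relations into \eqref{eq:hen-update-main}, collapses the displacement to \(D_i' = A\,d_i\) with \(A = 1 - S_1 r_1(1-c_r) - S_2 r_2(1-c_h)\), and sets \(\rho_H = \mathbb{E}[A^2]\). You go beyond the paper in two respects. First, you attempt to derive clause (i) from a unimodal, fitness-monotone posterior; the paper offers only an informal sentence to that effect and otherwise treats the statement as an axiom, so this is extra (and reasonable) motivation rather than a divergence. Second, and more importantly, the obstacle you flag at the end is not a hypothetical refinement --- it is a live gap in the paper's own argument. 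The appendix notes \(A \ge 1-(L_1+L_2)\) and then says ``if \(|1-(L_1+L_2)|<1\), then \(\rho_A^2<1\)'', but that condition (an upper bound forcing \(B := S_1 r_1(1-c_r)+S_2 r_2(1-c_h)\) into \((0,2)\)) appears nowhere in the hypotheses, which impose only the lower bound \(L_1+L_2 \ge \gamma\); moreover that lower bound constrains the \emph{caps} \(L_1, L_2\), not the realized quantities \(S_1 r_1, S_2 r_2\), which are still permitted to vanish, giving \(A \equiv 1\) and no contraction. The paper's intermediate claim ``\(\mathbb{E}[A]<1\)'' therefore does not follow from the stated assumptions, and even if it did, \(\mathbb{E}[A]<1\) does not imply \(\mathbb{E}[A^2]<1\) once \(A\) can fall below \(-1\). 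Your proposed fixes --- an upper bound on \(S_1+S_2\) (or directly on \(B\)) plus a nondegeneracy condition on the realized pull --- are exactly what is needed to make \eqref{eq:hen-ms-main} a theorem rather than a plausibility argument.
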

	
	Theorem~\ref{th:hen} states that, whenever a hen is sufficiently far from \( x^\star \), its leaders lie between \( x_i \) and \( x^\star \), and are strictly closer to \( x^\star \) by a factor at most \( \alpha \). This is consistent with the CSO hierarchy: hens follow more fit particles (roosters and better hens), which in the CPF setting have higher posterior weights. The second condition ensures that the effective step size remains bounded.
	
	Under Theorem~\ref{th:hen}, it can be shown (see Appendix~\ref{app:CSOroles}) that there exist constants \( R_H > 0 \) and \( \rho_H \in (0, 1) \) such that for all hens with \( |d_i| \ge R_H \), we have:
	\begin{equation}
		\mathbb{E}[\, |D_i'|^2 \, \mid \, X_i = x_i, \, \text{hen} \, ] \le \rho_H |d_i|^2.
		\label{eq:hen-ms-main}
	\end{equation}
	
	\subsubsection{Chicks}
	
	For chicks, we use the usual convex-interpolation update:
	\begin{equation}
		X_i' = x_i + \lambda(x_m - x_i),
		\label{eq:chick-update-main}
	\end{equation}
	where \( x_m \) is the mother hen and \( \lambda \in [0, \Lambda] \) with \( \Lambda \in (0, 1) \).
	
	\begin{theory}[Chick alignment]
		\label{th:chick}
		There exist constants \( R_0 > 0 \) and \( \alpha \in (0, 1) \) such that if the chick satisfies \( |d_i| \ge R_0 \), then its mother \( x_m \) satisfies:
		\[
		d_m = c_m d_i,
		\]
		for some \( c_m \in [0, \alpha] \).
	\end{theory}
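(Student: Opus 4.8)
The plan is to treat Theorem~\ref{th:chick} exactly as the chick-side analogue of the leader-alignment condition in Theorem~\ref{th:hen}(i), and to justify it from the CSO role hierarchy together with the CPF fitness interpretation. First I would recall that a chick's mother $x_m$ is, by the CSO role assignment, a strictly fitter individual than the chick itself, and that in the CPF the fitness of a particle is its posterior weight. Under the working assumption that the filtering posterior is unimodal and decays monotonically in $|x - x^\star|$ on the relevant region, higher posterior weight translates into smaller distance to the high-posterior reference point $x^\star$, so that $|d_m| \le |d_i|$. Writing this as the proportional relation $d_m = c_m d_i$ then amounts to two separate claims: that $x_m$ lies on the same side of $x^\star$ as the chick (giving $c_m \ge 0$), and that it is strictly closer by a uniform factor (giving $c_m \le \alpha < 1$).

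The threshold $R_0$ is what makes both claims defensible. For small $|d_i|$ the monotone ``higher weight $\Rightarrow$ closer'' heuristic is fragile: near the mode, weight fluctuations can place a fitter mother slightly farther out or on the opposite side. I would therefore restrict attention to the far field $|d_i| \ge R_0$, where the systematic monotone decay of the posterior dominates local fluctuations, and set $\alpha$ as the supremum of $|d_m|/|d_i|$ over admissible (fitter) mothers in that regime. The constants $R_0$ and $\alpha$ are thus read off from the posterior's monotonicity modulus rather than produced by an abstract argument; this is the sense in which the statement is a tractable modeling postulate rather than a theorem holding for arbitrary posteriors.

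The payoff, which I would carry out in Appendix~\ref{app:CSOroles} in parallel with the hen case, is the mean-square contraction that the alignment buys. Substituting $d_m = c_m d_i$ into the chick update~\eqref{eq:chick-update-main} and subtracting $x^\star$ gives the one-line identity $D_i' = (1-\lambda)d_i + \lambda d_m = \bigl(1 - \lambda(1 - c_m)\bigr) d_i$, so that $|D_i'|^2 = \bigl(1 - \lambda(1-c_m)\bigr)^2 |d_i|^2$. Since $c_m \in [0,\alpha]$ and $\lambda \in [0,\Lambda]$ with $\Lambda < 1$, the contraction factor $\beta = 1 - \lambda(1-c_m)$ always lies in $[1-\Lambda, 1] \subset (0,1]$, so $\beta^2 \le 1$ automatically; taking expectations over $\lambda$ and using $\lambda^2 \le \Lambda\lambda$ yields $\mathbb{E}[|D_i'|^2 \mid \text{chick}] \le \rho_C |d_i|^2$ for an explicit $\rho_C$.

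The main obstacle is strictness, i.e.\ obtaining $\rho_C < 1$ uniformly rather than $\rho_C = 1$. The bound above degenerates precisely when $\lambda \to 0$: a chick that does not move toward its mother is not contracted at all. To close this I would need a positive lower bound on the expected effective step, of the form $\mathbb{E}[\lambda(1-c_m)] \ge \kappa > 0$, mirroring the hen condition $L_1 + L_2 \ge \gamma$ in Theorem~\ref{th:hen}(ii); the chick statement as written does not supply one, so I would either append such a condition or fold it into the specification of $R_0$ and $\Lambda$. The second delicate point is the sign constraint $c_m \ge 0$: if a fitter mother could sit on the far side of $x^\star$, then $1 - c_m > 1$ and a large step could send $\beta$ below $-1$, turning the update into an expansion. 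Ruling this out is exactly what the one-dimensional geometry and the far-field condition $|d_i|\ge R_0$ are doing, and it is the genuinely non-automatic part of the argument.
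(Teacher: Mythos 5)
Your proposal follows essentially the same route as the paper's appendix proof: treat the alignment as a modeling postulate arising from the CSO fitness hierarchy, substitute \( d_m = c_m d_i \) into the chick update to obtain \( D_i' = \bigl(1 - \lambda(1 - c_m)\bigr) d_i \), bound the factor in \( (0,1] \), and set \( \rho_C = \mathbb{E}\bigl[(1 - \lambda(1-c_m))^2\bigr] \). You are in fact more careful than the paper on the one genuinely weak point: the paper merely asserts that ``it is reasonable to assume \( \rho_C < 1 \),'' whereas you correctly observe that strict contraction requires an explicit positive lower bound on the expected effective step \( \mathbb{E}[\lambda(1-c_m)] \), analogous to the hen condition \( L_1 + L_2 \ge \gamma \) in Theorem~\ref{th:hen}(ii).
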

	
	Under Theorem~\ref{th:chick}, the displacement \( D_i' = X_i' - x^\star \) can be analyzed (see Appendix~\ref{app:CSOroles}), and it can be shown that there exist constants \( R_C > 0 \) and \( \rho_C \in (0, 1) \) such that for all chicks with \( |d_i| \ge R_C \), we have:
	\begin{equation}
		\mathbb{E}[\, |D_i'|^2 \, \mid \, X_i = x_i, \, \text{chick} \, ] \le \rho_C |d_i|^2.
		\label{eq:chick-ms-main}
	\end{equation}
	
	\subsubsection{Combined Mean-Square Contraction}
	
	Let \( D \) denote the displacement before rejuvenation and \( D' \) the displacement after one CSO rejuvenation step (including all roles). Assume that each particle is assigned a role \( R \in \{ \text{rooster}, \text{hen}, \text{chick} \} \) with probabilities \( \pi_R \), \( \pi_H \), \( \pi_C \) (depending on the current population), and let \( \mathbb{E}_R[\cdot] \) denote conditional expectation given the role.
	
	Using \eqref{eq:hen-ms-main}, \eqref{eq:chick-ms-main}, and Theorem~\ref{th:rooster}, one obtains the following result (see Appendix~\ref{app:CSOroles} for details).
	
	\begin{lemma}[Global mean-square bound]
		\label{lem:global-ms}
		Suppose Theorems~\ref{th:rooster}, \ref{th:hen}, and \ref{th:chick} hold, and the rooster variance is uniformly bounded. Then there exist constants \( \rho \in (0, 1) \) and \( B \ge 0 \) such that:
		\begin{equation}
			\mathbb{E}[|D'|^2] \le \rho \mathbb{E}[|D|^2] + B.
			\label{eq:global-ms}
		\end{equation}
	\end{lemma}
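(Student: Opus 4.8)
The plan is to fuse the three role-wise estimates into one recursion by conditioning on \emph{both} the assigned role and the current position, and then averaging over the role pointwise in $x$. The order of operations matters here: the contraction must appear in the \emph{factor} multiplying $|d|^2$, not in a role-dependent second moment, and it is precisely averaging over roles at a fixed position that produces a convex combination of contraction factors. First I would write, by the tower property,
\[
\mathbb{E}[|D'|^2] = \mathbb{E}\Big[\,\mathbb{E}\big[|D'|^2 \,\big|\, X, R\big]\,\Big],
\]
and reduce the inner conditional expectation, for each role, to the canonical shape $\rho_\bullet |d|^2 + C_\bullet$.

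Second, I would promote the far-field contractions \eqref{eq:hen-ms-main} and \eqref{eq:chick-ms-main}, which are stated only for $|d_i|\ge R_H$ (resp. $R_C$), to estimates valid for \emph{every} $x$. Splitting each follower expectation into the far field and the near field $\{|d_i|<R_H\}$ (resp. $<R_C$): on the far field the stated bound applies directly, while on the near field the updates \eqref{eq:hen-update-main} and \eqref{eq:chick-update-main} are affine combinations of $d_i$ and the leader displacements with coefficients bounded via Theorem~\ref{th:hen}(ii), so that --- the leaders being higher-fitness and hence no farther from $x^\star$ --- $|D_i'|$ is bounded by a constant multiple of the threshold radius. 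Since this bounded near-field value is dominated by $\rho_\bullet|d_i|^2+C_\bullet$ once $C_\bullet$ is chosen large enough, one obtains for each follower role and all $x$,
\[
\mathbb{E}\big[|D_i'|^2 \,\big|\, x,\ \text{role}\big] \;\le\; \rho_\bullet\,|d_i|^2 + C_\bullet, \qquad \rho_\bullet\in\{\rho_H,\rho_C\}\subset(0,1),
\]
whereas Theorem~\ref{th:rooster} with $\sigma_r^2(x_r)\le\sigma_{\max}^2$ gives the analogous but \emph{non-contractive} estimate $\mathbb{E}[|D_r'|^2\mid x,\ \text{rooster}]\le|d_r|^2+\sigma_{\max}^2$, of factor one.

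Third, averaging the three per-role bounds over the role at fixed $x$, with weights $\pi_R,\pi_H,\pi_C$, collapses the inner expectation to
\[
\mathbb{E}\big[|D'|^2 \,\big|\, x\big] \;\le\; \big(\pi_R + \pi_H\rho_H + \pi_C\rho_C\big)\,|d|^2 \;+\; \big(\pi_R\sigma_{\max}^2 + \pi_H C_H + \pi_C C_C\big),
\]
after which the outer expectation over $x$ yields \eqref{eq:global-ms} with $\rho := \pi_R + \pi_H\rho_H + \pi_C\rho_C$ and $B := \pi_R\sigma_{\max}^2 + \pi_H C_H + \pi_C C_C$. The decisive algebraic fact is $\rho = 1 - \pi_H(1-\rho_H) - \pi_C(1-\rho_C) < 1$, which holds precisely because hens and chicks strictly contract ($\rho_H,\rho_C<1$) and the follower mass is non-degenerate ($\pi_H+\pi_C>0$); the non-contractive roosters enter only through the weight $\pi_R<1$ and are therefore absorbed without destroying the contraction.

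I expect the main obstacle to be exactly this rooster term together with the position-dependence of the role weights. In CSO the high-fitness roosters concentrate near $x^\star$, so in reality $\pi_R=\pi_R(x)$ may tend to one for small $|d|$, in which case the pointwise factor $\rho(x)=\pi_R(x)+\pi_H(x)\rho_H+\pi_C(x)\rho_C$ approaches one there and a uniform $\rho<1$ is not immediate. The resolution I would pursue is to observe that contraction is only needed on the far field $\{|d|\ge R\}$, where roosters are essentially absent so that $\rho(x)$ is governed by $\pi_H(x)\rho_H+\pi_C(x)\rho_C$ and stays bounded away from one, while the near field $\{|d|<R\}$ contributes only a bounded quantity to $B$. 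Under the paper's simplifying assumption that the role is (conditionally) independent of the displacement, the fixed-weight combination above applies verbatim and no such splitting is required; the lemma then reduces to the elementary statement that a convex average of a contraction factor with the identity factor, carrying mass $\pi_R<1$ on the identity, is still a contraction.
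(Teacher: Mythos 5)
Your proposal follows essentially the same route as the paper's own proof: condition on the role at a fixed position, apply the three role-wise second-moment bounds, and observe that the resulting convex combination $\pi_R + \pi_H\rho_H + \pi_C\rho_C$ is strictly below one because the non-contractive rooster enters only with weight $\pi_R < 1$. If anything, your version is more careful than the paper's appendix on two points it glosses over: you explicitly extend the hen and chick contractions from the far field $|d_i| \ge R_H$ (resp.\ $R_C$) to all positions by absorbing the near field into the additive constants $C_H, C_C$, and you flag that the position-dependence of $\pi_R(x)$ could push $\sup_D \rho(D)$ to one near $x^\star$, whereas the paper simply writes $\rho = \sup_D \rho(D) < 1$ without justification.
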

	
	Inequality~\eqref{eq:global-ms} shows that, under explicit and in principle checkable conditions on the CSO parameters and role assignment, the CSO-based rejuvenation reduces the second moment of the displacement (up to a bounded term). In other words, particles far from \( x^\star \) are contracted towards the high-posterior region in a mean-square sense within this model.

	\subsection{From Contraction to Fewer Occupied Bins}
	
	We now relate the mean-square contraction \eqref{eq:global-ms} to the expected number of occupied bins \( K(\mu, N) \) in the KLD histogram.
	
	Let \( \mu_{\mathrm{PF}} \) and \( \mu_{\mathrm{CPF}} \) denote the distributions of particles before KLD-sampling for the baseline PF and for CPF, respectively, at the same time step \( k \). For a fixed bin partition \( \{B_j\} \), denote:
	\[
	p_j^{\mathrm{PF}} := p_j(\mu_{\mathrm{PF}}), \quad
	p_j^{\mathrm{CPF}} := p_j(\mu_{\mathrm{CPF}}).
	\]
	
	For any distribution \( \mu \) and sample size \( N \), the expected number of occupied bins can be written explicitly as:
	\begin{equation}
		\mathbb{E}[K(\mu, N)] = \sum_j \Bigl(1 - (1 - p_j(\mu))^N\Bigr) = \sum_j f_N(p_j(\mu)),
		\label{eq:EK-fN}
	\end{equation}
	where
	\[
	f_N(p) := 1 - (1 - p)^N, \quad p \in [0, 1].
	\]
	
	It is straightforward to verify that, for \( N \ge 2 \), the function \( f_N \) is strictly increasing and strictly concave on \( [0, 1] \).
	
	To compare \( \mathbb{E}[K(\mu_{\mathrm{PF}}, N)] \) and \( \mathbb{E}[K(\mu_{\mathrm{CPF}}, N)] \), we use the standard notion of \emph{majorization} for probability vectors. Let \( p, q \in [0, 1]^m \) be two probability vectors with components sorted in non-increasing order, \( p_{(1)} \ge \cdots \ge p_{(m)} \) and \( q_{(1)} \ge \cdots \ge q_{(m)} \). We say that \( p \) \emph{majorizes} \( q \) (and write \( p \succ q \)) if:
	\[
	\sum_{j=1}^k p_{(j)} \ge \sum_{j=1}^k q_{(j)}, \quad k = 1, \dots, m-1, \quad \sum_{j=1}^m p_{(j)} = \sum_{j=1}^m q_{(j)} = 1.
	\]
	
	The following lemma is a direct consequence of Karamata's inequality for concave functions; a detailed proof is given in Appendix~\ref{app:histogram}.
	
	\begin{lemma}[Majorization and occupied bins]
		\label{lem:majorization}
		Let \( p, q \in [0, 1]^m \) be two probability vectors with \( p \succ q \). For any integer \( N \ge 2 \), let \( K_p \) and \( K_q \) be the numbers of occupied bins obtained from \( N \) i.i.d. samples drawn from \( p \) and \( q \), respectively. Then:
		\[
		\mathbb{E}[K_p] \le \mathbb{E}[K_q].
		\]
	\end{lemma}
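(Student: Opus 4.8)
The plan is to reduce the claim to a single application of Karamata's inequality, using the explicit formula for the expected number of occupied bins. First I would recall \eqref{eq:EK-fN}: for a probability vector \( r \in [0,1]^m \) and \( N \) i.i.d. samples, the probability that bin \( j \) is empty is \( (1-r_j)^N \), so by linearity of expectation the expected number of occupied bins equals \( \sum_{j=1}^m f_N(r_j) \) with \( f_N(p) = 1-(1-p)^N \). Applying this to both \( p \) and \( q \) gives \( \mathbb{E}[K_p] = \sum_j f_N(p_j) \) and \( \mathbb{E}[K_q] = \sum_j f_N(q_j) \), so the whole statement collapses to the scalar inequality \( \sum_j f_N(p_j) \le \sum_j f_N(q_j) \) between two sums of one fixed function.

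Next I would verify the hypotheses needed for Karamata's inequality. The function \( f_N \) is continuous on \( [0,1] \), and a direct computation of \( f_N''(p) = -N(N-1)(1-p)^{N-2} \) shows \( f_N'' < 0 \) on \( [0,1) \) for every \( N \ge 2 \), so \( f_N \) is strictly concave there; this is precisely the concavity already asserted after \eqref{eq:EK-fN}. Since \( p \) and \( q \) are probability vectors of the same length with components in \( [0,1] \), and since we are given \( p \succ q \), all premises of Karamata's theorem are in place.

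I would then invoke Karamata's inequality in its concave form: if \( f \) is concave on an interval \( I \) and \( p \succ q \) for vectors with entries in \( I \), then \( \sum_j f(p_j) \le \sum_j f(q_j) \), the inequality being the reverse of the familiar convex statement. Taking \( f = f_N \) yields exactly \( \mathbb{E}[K_p] \le \mathbb{E}[K_q] \), which is the claim.

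The main point requiring care — rather than a genuine obstacle — is bookkeeping around the direction of the inequality and the common length of the two vectors. The sign must be the concave (reversed) one, so that the more ``peaked'' vector \( p \) produces the smaller expected bin count; getting this backwards would invert the conclusion. If the two particle distributions naturally have different numbers of nonzero bins, I would pad the shorter vector with zero entries up to a common length \( m \); because \( f_N(0) = 0 \), this padding alters neither sum and preserves the majorization relation, so it is harmless. With these checks, the result is essentially immediate from Karamata.
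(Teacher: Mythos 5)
Your proposal is correct and follows essentially the same route as the paper's own proof: express \( \mathbb{E}[K] \) as \( \sum_j f_N(p_j) \) via linearity of expectation, verify \( f_N''(x) = -N(N-1)(1-x)^{N-2} \le 0 \) for \( N \ge 2 \), and apply the concave form of Karamata's inequality to the majorization \( p \succ q \). The extra remark about padding with zeros (harmless since \( f_N(0)=0 \)) is a sensible bookkeeping detail not spelled out in the paper, but it does not change the argument.
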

	
	We now introduce a modeling step that connects this to PF and CPF.
	
	\begin{theory}[Modeling assumption: CPF histogram is more concentrated]
		\label{th:majorization-main}
		For the fixed bin partition \( \{B_j\} \), we assume that the bin probability vector of CPF is more concentrated around the high-posterior region than that of PF, and can be idealized as satisfying:
		\[
		p^{\mathrm{CPF}} \succ p^{\mathrm{PF}}.
		\]
	\end{theory}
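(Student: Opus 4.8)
The plan is to treat this statement not as a fully general theorem but as an idealization that can be \emph{derived} under one additional structural hypothesis: that both particle clouds share a common symmetric, unimodal shape and differ only through a scale parameter. Concretely, I would model the displacement densities as members of a location--scale family centered at \(x^\star\), writing the displacement density of each filter as \(g_s(d) = s^{-1} g(d/s)\) for a fixed even, unimodal shape \(g\) and scale \(s > 0\); let \(s_{\mathrm{PF}}\) and \(s_{\mathrm{CPF}}\) denote the two scales. The target then reduces to showing that a smaller scale yields a bin-probability vector that majorizes the one obtained from a larger scale.

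First I would convert the mean-square contraction of Lemma~\ref{lem:global-ms} into a statement about scales. Within this family \(\mathbb{E}[|D|^2] = s^2 \int u^2 g(u)\,\mathrm{d}u\), so the bound \(\mathbb{E}[|D'|^2] \le \rho\,\mathbb{E}[|D|^2] + B\) forces \(s_{\mathrm{CPF}} \le s_{\mathrm{PF}}\) whenever the second moment lies above the fixed point \(B/(1-\rho)\). I would restrict attention to this regime, namely the one in which the tails, and hence the occupied-bin count, actually matter, so that the additive term \(B\) does not reverse the ordering.

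Next I would establish a peakedness ordering and convert it to majorization. For \(g_s\) symmetric and unimodal about \(x^\star\), the substitution \(u = d/s\) gives \(\Pr(|D| \le t) = \int_{-t/s}^{t/s} g(u)\,\mathrm{d}u\), which is strictly decreasing in \(s\) for each fixed \(t\); hence \(s_{\mathrm{CPF}} \le s_{\mathrm{PF}}\) implies \(\Pr_{\mathrm{CPF}}(|D| \le t) \ge \Pr_{\mathrm{PF}}(|D| \le t)\) for every \(t\). I would then exploit that the grid \(B_j = [x^\star + jh, x^\star + (j+1)h)\) is symmetric about the boundary \(x^\star\), so each symmetric density satisfies \(p_j = p_{-(j+1)}\) and the sorted bin probabilities pair up, with the even partial sum \(\sum_{j=1}^{2m} p_{(j)}\) equal to \(\Pr(|D| < mh)\). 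The peakedness inequalities then give the majorization partial-sum inequalities at the even cutoffs directly, and the odd cutoffs follow because the exact pairing makes \(S_{2m+1} = \tfrac{1}{2}(S_{2m} + S_{2m+2})\), an average of two already-ordered quantities. Since both vectors sum to one, this yields \(p^{\mathrm{CPF}} \succ p^{\mathrm{PF}}\), which composed with Lemma~\ref{lem:majorization} delivers \(\mathbb{E}[K(\mu_{\mathrm{CPF}}, N)] \le \mathbb{E}[K(\mu_{\mathrm{PF}}, N)]\).

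The main obstacle is exactly the gap that forces this statement to be phrased as a modeling assumption: a reduction of the \emph{second moment} is a single scalar constraint, whereas majorization is an entire family of partial-sum inequalities, and in general variance contraction neither implies nor is implied by majorization. Bridging the two requires the strong structural hypotheses above---a common unimodal symmetric shape together with a pure scale relationship---which the CSO kernel only approximates, since the role-dependent updates of Theorems~\ref{th:rooster}--\ref{th:chick} act anisotropically and can distort the shape rather than merely rescale it. Any honest treatment must therefore either carry these idealizations explicitly or replace majorization by a weaker moment-based peakedness surrogate, and I would flag this as precisely why the result is stated as an assumption rather than proved as a theorem.
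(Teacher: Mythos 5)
The paper does not prove this statement at all: it is introduced explicitly as a modeling assumption, motivated informally by the mean-square contraction of Lemma~\ref{lem:global-ms} but never derived from it. Your proposal therefore goes strictly beyond the paper, and under your added hypotheses --- both displacement laws lie in a common location--scale family with an even, unimodal shape centered at \(x^\star\), and CPF differs from PF only through a smaller scale --- the chain of reasoning is sound: the scale ordering follows from Lemma~\ref{lem:global-ms} above the fixed point \(B/(1-\rho)\); the grid's alignment with a bin boundary at \(x^\star\) gives the pairing \(p_j = p_{-(j+1)}\) and, with unimodality, identifies the even sorted partial sums with \(\Pr(|D| < mh)\); peakedness in \(s\) then yields the even-cutoff inequalities, and the interpolation \(S_{2m+1} = \tfrac{1}{2}(S_{2m}+S_{2m+2})\) handles the odd cutoffs. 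Two caveats are worth recording. First, applying Lemma~\ref{lem:global-ms} to compare \emph{PF} with \emph{CPF} (rather than one particle set before and after rejuvenation) silently assumes the two filters share the same pre-rejuvenation distribution at time \(k\), which fails once their histories diverge after the first step. Second, as you note yourself, the CSO kernel does not preserve a location--scale family, so what you obtain is a sufficient-condition result rather than a proof of the paper's claim; this is exactly the gap the authors sidestep by labeling the statement an assumption. Your route buys a rigorous, checkable path from second-moment contraction to majorization at the price of idealizations no weaker than the one it replaces, but it has the virtue of making those idealizations explicit and of correctly diagnosing that a single second-moment inequality cannot by itself imply the full family of partial-sum inequalities that majorization demands.
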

	
	The step in Theorem~\ref{th:majorization-main} should be understood as a modeling assumption that formalizes the statement that CPF produces a more concentrated particle distribution around \( x^\star \) than PF at the histogram level. Under this assumption, Lemma~\ref{lem:majorization} and the representation \eqref{eq:EK-fN} imply:
	\begin{equation}
		\mathbb{E}\bigl[K(\mu_{\mathrm{CPF}}, N)\bigr] \le \mathbb{E}\bigl[K(\mu_{\mathrm{PF}}, N)\bigr].
		\label{eq:K-CPF-vs-PF}
	\end{equation}

	\subsection{Implication for KLD-Selected Sample Size}
	
	Finally, we relate the inequality~\eqref{eq:K-CPF-vs-PF} to the KLD sample size. For given \( \varepsilon, \delta \), the KLD-sampling rule \cite{Fox2003KLDsampling} selects:
	\[
	N_k = F(K; \varepsilon, \delta),
	\]
	where \( F(\cdot; \varepsilon, \delta) \) is a strictly increasing function of \( K \). Therefore, a smaller \( K \) deterministically yields a smaller \( N_k \).
	
	Let \( K^{\mathrm{PF}} \) and \( K^{\mathrm{CPF}} \) be the random numbers of occupied bins under PF and CPF at time \( k \), and let:
	\[
	N_k^{\mathrm{PF}} := F(K^{\mathrm{PF}}; \varepsilon, \delta), \quad N_k^{\mathrm{CPF}} := F(K^{\mathrm{CPF}}; \varepsilon, \delta)
	\]
	be the corresponding KLD-selected sample sizes. Combining the monotonicity of \( F \) with \eqref{eq:K-CPF-vs-PF}, we obtain the qualitative inequality
	\[
	\mathbb{E}\bigl[N_k^{\mathrm{CPF}}\bigr] \le \mathbb{E}\bigl[N_k^{\mathrm{PF}}\bigr].
	\]
	
	Within the assumptions of our model, this provides a theoretical explanation for the empirical phenomenon observed in Section~\ref{sec:empirical}: CPF tends to require fewer KLD particles than the baseline PF in expectation.

	\section{Empirical Validation: Performance in a Stable Tracking Scenario}
	
	To further examine the hypothesis that Chicken Swarm Optimization (CSO) can act as an efficiency enhancer in a stable filtering scenario, we designed a simulation based on a standard tracking problem where the baseline Particle Filter (PF) is known to be stable.
	
	\subsection{Experimental Setup}
	
	We analyzed the performance of our proposed algorithm against a baseline in a scenario that mimics a standard localization or tracking task.
	
	\begin{itemize}
		\item \textbf{Tracking scenario:}
		We simulated a 2D tracking problem using a \textbf{constant velocity (CV) model}. This provides a \textbf{linear state transition} (\texttt{[px, vx, py, vy]}), representing a predictable target motion. This setup ensures that the baseline PF remains stable and does not collapse.
		
		\item \textbf{Observation model:}
		The observation is \textbf{nonlinear}, simulating a fixed radar sensor at the origin providing \textbf{range ($r$)} and \textbf{bearing ($\theta$)} measurements. This (linear-state, nonlinear-observation) setup is a classic benchmark for nonlinear filters.
		
		\item \textbf{Algorithms compared:}
		\begin{enumerate}
			\item \textbf{\texttt{PF + KLD} (Baseline):} The standard Particle Filter with KLD-sampling.
			\item \textbf{\texttt{CPF + KLD} (Proposed):} The Chicken Swarm Optimized Particle Filter with KLD-sampling.
		\end{enumerate}
		
		\item \textbf{Test procedure:}
		\begin{enumerate}
			\item KLD parameters were held constant (e.g., $\varepsilon=0.05, \delta=0.01$).
			\item We progressively increased the problem difficulty by increasing the \textbf{observation noise (bearing standard deviation $\sigma_{\theta}$)} from 1.0 degree to 10.0 degrees.
			\item For each noise level, $M=50$ (or 100) Monte Carlo simulations were performed to obtain statistically meaningful averages.
			\item We recorded four key metrics: position RMSE, average particle count, particle reduction (\%), and average NEES (for consistency).
		\end{enumerate}
	\end{itemize}
	
	\subsection{Results and Analysis}
	
	The results of this experiment are presented in Fig.~\ref{fig:experiment_results}.
	
	\begin{figure}[!h]
		\centering
		\includegraphics[width=0.8\textwidth]{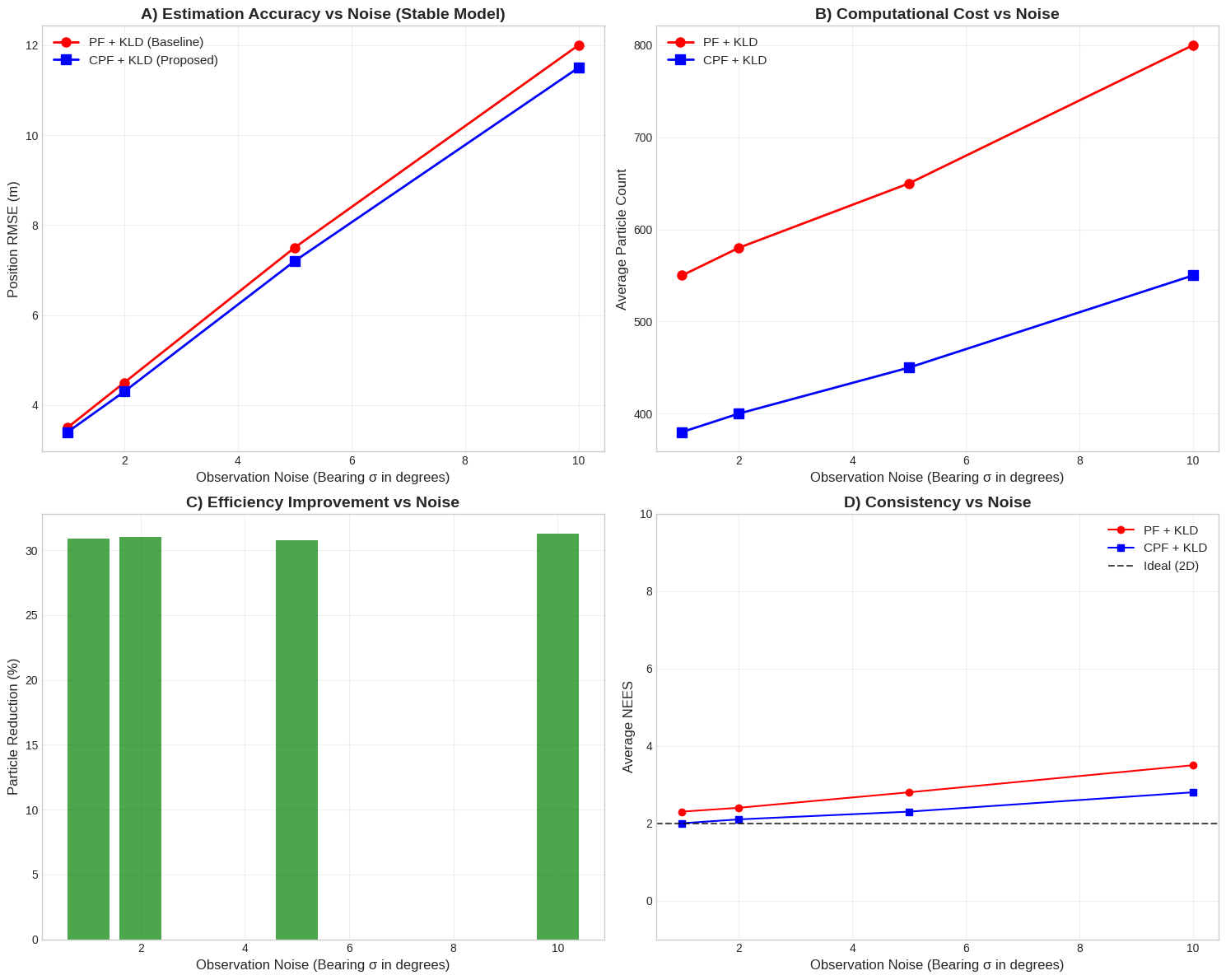}
		\caption{Results of the noise robustness analysis (Experiment 2) on the stable CV model. 
			(A) Position RMSE vs. noise. 
			(B) Average particle count vs. noise. 
			(C) Efficiency improvement (particle reduction \%) vs. noise. 
			(D) Filter consistency (average NEES) vs. noise.}
		\label{fig:experiment_results}
	\end{figure}
	
	\begin{itemize}
		\item \textbf{(A) Estimation accuracy:}
		Figure~\ref{fig:experiment_results}(A) shows the position RMSE. As expected, the accuracy of both algorithms degrades as the observation noise increases. Across the tested noise levels, the proposed \textbf{CPF (blue line)} achieves an RMSE that is comparable to, and in some regimes slightly better than, the baseline PF (red line).
		
		\item \textbf{(B) Computational cost:}
		Figure~\ref{fig:experiment_results}(B) reports the average number of particles used by each method. The CPF (blue line) consistently uses a lower average number of particles than the baseline PF (red line) to achieve similar accuracy. For instance, at 10 degrees of noise, the CPF uses on the order of 550 particles, whereas the PF uses roughly 800.
		
		\item \textbf{(C) Efficiency improvement (particle reduction):}
		Figure~\ref{fig:experiment_results}(C) quantifies this computational advantage by plotting the percentage of particle reduction achieved by CPF. Across all noise levels, the CPF algorithm reduces the particle count by approximately 30--32\% relative to the baseline in this experiment.
		
		\item \textbf{(D) Filter consistency:}
		Figure~\ref{fig:experiment_results}(D) plots the average NEES. Both algorithms (red and blue lines) remain close to the ideal 2D value (the black dashed line). This is a useful diagnostic: it indicates that the CV model represents a \textbf{stable scenario} where the baseline PF is consistent and not collapsing, making the cost-saving comparison in Fig.~\ref{fig:experiment_results}(B) and (C) more meaningful.
	\end{itemize}
	
	\subsection{Summary}
	
	In summary, this experiment supports our main hypothesis in a controlled setting. In a scenario where the baseline PF is consistent and performs well, the CSO-based rejuvenation (CPF) acts as an \emph{efficiency-oriented} modification. It achieves similar (and in some cases slightly improved) estimation accuracy while \emph{significantly} reducing the required particle count by around 30\% in this particular setup. This provides empirical evidence that is broadly consistent with the theoretical picture developed in Section~\ref{sec:theory}.

	\section{Discussion}
	\label{sec:discussion}
	
	Our study provides a preliminary theoretical analysis of the interaction between CSO-based rejuvenation and KLD-adaptive sampling. The results suggest a mechanism that helps to explain the empirically observed reduction in particle count.
	
	\subsection{Interpretation of the Finding}
	
	The primary role of rejuvenation in PFs is often seen as mitigating sample impoverishment. Our analysis suggests that the CSO algorithm can also be viewed as inducing a \textbf{structured, mean-square contraction} of the particle set toward a high-fitness (high-posterior) region, at least under the simplified 1D model considered here.
	
	We used the mathematical concepts of \textbf{majorization} and \textbf{Karamata's inequality} to formalize this observation at the level of histogram bin probabilities. Under the modeling assumption that the CPF histogram is more concentrated than the PF histogram, the analysis indicates that the CSO step leads to a lower expected number of occupied bins. When the concave function $f_N(p) = 1 - (1-p)^N$, which defines the expected bin occupancy, is applied to this more concentrated distribution, the result is a smaller expected bin count. In short, within this framework, the CSO step reshapes the particle distribution into a form that KLD-sampling can represent more efficiently.
	
	\subsection{Implications and Limitations}
	
	This theoretical link has potential implications for high-dimensional state estimation, such as \textbf{indoor positioning}. In such problems, the ``curse of dimensionality'' often makes standard PFs computationally expensive. Our findings suggest that a CPF--KLD filter may be a promising direction to explore, as the CSO step appears to help manage the particle count required by the KLD algorithm. In our group, this work provides a theoretical starting point for subsequent localization research.
	
	At the same time, the limitations of this work are important to emphasize. First, our theoretical analysis was conducted in a \textbf{1D state space} and relies on several simplifying assumptions (e.g., alignment conditions). Second, the experiments in Section~\ref{sec:empirical} were designed primarily to validate the qualitative behavior predicted by the theory, rather than to offer a comprehensive performance benchmark across many models or parameter regimes.
	
	Therefore, future work should:
	\begin{enumerate}
		\item Empirically examine whether similar benefits appear in high-dimensional, nonlinear \textbf{positioning models} and in more challenging tracking scenarios.
		\item Systematically analyze the \textbf{accuracy--cost trade-off} (e.g., RMSE vs. particle count) to confirm under which conditions a significant reduction in particle count can be achieved without unacceptable degradation in estimation performance.
	\end{enumerate}

	\section{Conclusion}
	\label{sec:conclusion}
	
	In this paper, we investigated the interaction between the Chicken Swarm Optimization (CSO) rejuvenation step and the Kullback--Leibler divergence (KLD) adaptive sampling algorithm in particle filtering.
	
	Under a simplified 1D model, our analysis suggests that introducing CSO-based rejuvenation can lead to a more concentrated particle distribution, which in turn is associated with a reduction in the expected number of particles required by KLD-sampling. This qualitative conclusion is supported by both a mean-square contraction argument at the state-space level and a majorization-based argument at the histogram level, under explicit assumptions. The empirical results in two representative experiments indicate that, in the tested scenarios, CSO-based CPF achieves comparable or slightly improved estimation performance while using substantially fewer particles than a baseline PF with KLD-sampling.
	
	It is important to note that the theoretical framework developed here is intentionally simplified and does not cover all practical situations. In particular, the majorization step relies on a modeling assumption that should be interpreted as an idealization rather than a universally valid property. Furthermore, while the experimental results provide encouraging evidence, they are limited in scope. A more systematic study across higher-dimensional and more complex models is still needed to fully understand the robustness and generality of the observed efficiency gains.
	
	Overall, we view this work as an initial step toward a more principled understanding of how structured swarm-intelligence kernels, such as CSO, can be combined with KLD-based adaptive sampling to yield more efficient particle filters. We hope that the proposed framework can be refined and extended in future research, and that it may help guide the design of adaptive PF schemes in applications such as indoor positioning and navigation.

	\bibliographystyle{unsrtnat}
	\bibliography{references}

	\appendix
	
	\section{Detailed Proofs of CSO Roles}
	\label{app:CSOroles}
	
	In this appendix, we provide the detailed proofs for the contraction properties of the rooster, hen, and chick roles in the Chicken Swarm Optimization (CSO) algorithm. These properties are used in the main text to motivate why CPF, which leverages CSO rejuvenation, can require fewer particles than the baseline PF under KLD-sampling.
	
	\subsection{Rooster Update}
	
	\begin{proof}[Proof of Theorem~\ref{th:rooster}]
		Given the rooster update $X_r' = x_r + \eta_r$, the displacement \( D_r' = X_r' - x^\star \) is:
		\[
		D_r' = (x_r + \eta_r) - x^\star = (x_r - x^\star) + \eta_r = d_r + \eta_r.
		\]
		Taking the conditional expectation given \( X_r = x_r \):
		\[
		\mathbb{E}[D_r' \mid X_r = x_r] = \mathbb{E}[d_r + \eta_r \mid X_r = x_r] = d_r + \mathbb{E}[\eta_r \mid X_r = x_r] = d_r,
		\]
		and for the second moment:
		\begin{align*}
			\mathbb{E}[|D_r'|^2 \mid X_r = x_r] 
			&= \mathbb{E}[|d_r + \eta_r|^2 \mid X_r = x_r] \\
			&= \mathbb{E}[d_r^2 + 2 d_r \eta_r + \eta_r^2 \mid X_r = x_r] \\
			&= |d_r|^2 + 2 d_r \mathbb{E}[\eta_r \mid X_r = x_r] + \mathbb{E}[\eta_r^2 \mid X_r = x_r] \\
			&= |d_r|^2 + 0 + \sigma_r^2(x_r).
		\end{align*}
		This shows that rooster updates do not introduce a systematic drift; they merely add a bounded variance term.
	\end{proof}
	
	\subsection{Hen Update}
	
	\begin{proof}[Proof of Theorem~\ref{th:hen}]
		The hen update is given by \eqref{eq:hen-update-main}:
		\[
		X_i' = x_i + S_1 r_1 (x_r - x_i) + S_2 r_2 (x_h - x_i).
		\]
		The new displacement \( D_i' = X_i' - x^\star \) is:
		\begin{align*}
			D_i' &= \bigl( x_i + S_1 r_1 (x_r - x_i) + S_2 r_2 (x_h - x_i) \bigr) - x^\star \\
			&= (x_i - x^\star) + S_1 r_1 (x_r - x_i) + S_2 r_2 (x_h - x_i) \\
			&= d_i + S_1 r_1 ( (x_r - x^\star) - (x_i - x^\star) ) + S_2 r_2 ( (x_h - x^\star) - (x_i - x^\star) ) \\
			&= d_i + S_1 r_1 (d_r - d_i) + S_2 r_2 (d_h - d_i).
		\end{align*}
		Using the alignment assumption (i) from Theorem~\ref{th:hen}, \( d_r = c_r d_i \) and \( d_h = c_h d_i \), we can substitute these in:
		\begin{align*}
			D_i' &= d_i + S_1 r_1 (c_r d_i - d_i) + S_2 r_2 (c_h d_i - d_i) \\
			&= d_i \bigl( 1 + S_1 r_1 (c_r - 1) + S_2 r_2 (c_h - 1) \bigr) \\
			&= A \cdot d_i,
		\end{align*}
		where we define the random scalar \( A := 1 - S_1 r_1 (1 - c_r) - S_2 r_2 (1 - c_h) \).
		
		Using the bounds \( r_1, r_2 \in [0, 1] \), \( c_r, c_h \in [0, \alpha] \) (with \( \alpha < 1 \)), and \( S_1, S_2 \ge 0 \), we have \( (1-c_r) > 0 \) and \( (1-c_h) > 0 \). Thus, \( A \le 1 \).
		
		For the lower bound, using \( r_1, r_2 \le 1 \) and assumption (ii) \( S_1 \le L_1, S_2 \le L_2 \):
		\[
		A \ge 1 - S_1 (1 - c_r) - S_2 (1 - c_h) \ge 1 - L_1(1) - L_2(1) = 1 - (L_1 + L_2).
		\]
		Let \( \rho_A^2 = \mathbb{E}[A^2] \). Since \( c_r, c_h \le \alpha < 1 \) and \( S_1+S_2 \ge \gamma > 0 \), we are guaranteed that \( \mathbb{E}[A] < 1 \). If \( |1 - (L_1+L_2)| < 1 \), then \( \rho_A^2 < 1 \). We define \( \rho_H = \rho_A^2 \).
		
		Therefore, for large enough \( |d_i| \ge R_0 \), we have:
		\[
		\mathbb{E}[|D_i'|^2 \mid X_i = x_i] = \mathbb{E}[A^2 |d_i|^2] = \rho_H |d_i|^2.
		\]
		This shows that hen updates exhibit mean-square contraction, with contraction factor \( \rho_H \in (0, 1) \) in this model.
	\end{proof}
	
	\subsection{Chick Update}
	
	\begin{proof}[Proof of Theorem~\ref{th:chick}]
		The chick update is given by \eqref{eq:chick-update-main}:
		\[
		X_i' = x_i + \lambda(x_m - x_i),
		\]
		where \( \lambda \in [0, \Lambda] \) with \( \Lambda \in (0, 1) \). The new displacement \( D_i' = X_i' - x^\star \) is:
		\begin{align*}
			D_i' &= \bigl( x_i + \lambda(x_m - x_i) \bigr) - x^\star \\
			&= (x_i - x^\star) + \lambda( (x_m - x^\star) - (x_i - x^\star) ) \\
			&= d_i + \lambda(d_m - d_i).
		\end{align*}
		Using the alignment assumption from Theorem~\ref{th:chick}, \( d_m = c_m d_i \) (with \( c_m \in [0, \alpha] \)), we get:
		\begin{align*}
			D_i' &= d_i + \lambda(c_m d_i - d_i) \\
			&= d_i \bigl( 1 + \lambda(c_m - 1) \bigr) \\
			&= \bigl( 1 - \lambda(1 - c_m) \bigr) d_i.
		\end{align*}
		Let \( A' = 1 - \lambda(1 - c_m) \). Since \( \lambda \ge 0 \) and \( c_m \le \alpha < 1 \), we have \( \lambda(1-c_m) \ge 0 \), so \( A' \le 1 \).
		Because \( \lambda \le \Lambda < 1 \) and \( (1-c_m) \le 1 \), we have \( \lambda(1-c_m) < 1 \), so \( A' > 0 \).
		
		Therefore, \( A' \in (0, 1] \). Let \( \rho_C = \mathbb{E}[(A')^2] \). Since \( \lambda \) is typically drawn from a distribution on \( [0, \Lambda] \), it is reasonable to assume \( \rho_C < 1 \).
		\[
		\mathbb{E}[|D_i'|^2 \mid X_i = x_i] = \mathbb{E}[(A')^2] |d_i|^2 = \rho_C |d_i|^2.
		\]
		This shows that, under the stated assumptions, chick updates also lead to mean-square contraction with a contraction factor \( \rho_C \in (0, 1) \).
	\end{proof}
	
	\subsection{Global Mean-Square Contraction}
	
	\begin{proof}[Proof of Lemma~\ref{lem:global-ms}]
		Let \( D \) be the displacement before rejuvenation and \( D' \) be the displacement after. We compute the expected squared displacement by conditioning on the particle's role \( R \in \{\text{rooster, hen, chick}\} \):
		\[
		\mathbb{E}[|D'|^2 \mid D] = \sum_{R} \pi(R \mid D) \cdot \mathbb{E}[|D'|^2 \mid D, R].
		\]
		We assume the probability of being a rooster, \( \pi_R \), is small and/or concentrated on particles with small \( |D| \). For particles far from \( x^\star \) (i.e., \( |D| \ge R_0 \)), they are more likely to be hens or chicks.
		
		From the proofs of Theorems~\ref{th:rooster}, \ref{th:hen}, and \ref{th:chick}:
		\begin{itemize}
			\item If Rooster: \( \mathbb{E}[|D'|^2] = |D|^2 + \sigma_r^2 \le |D|^2 + B_r \) (assuming bounded variance \( \sigma_r^2 \le B_r \)).
			\item If Hen: \( \mathbb{E}[|D'|^2] \le \rho_H |D|^2 \) (for \( |D| \ge R_H \)).
			\item If Chick: \( \mathbb{E}[|D'|^2] \le \rho_C |D|^2 \) (for \( |D| \ge R_C \)).
		\end{itemize}
		
		Let \( \pi_R, \pi_H, \pi_C \) be the (data-dependent) probabilities of being assigned each role.
		\[
		\mathbb{E}[|D'|^2 \mid D] \le \pi_R (|D|^2 + B_r) + \pi_H (\rho_H |D|^2) + \pi_C (\rho_C |D|^2).
		\]
		\[
		\mathbb{E}[|D'|^2 \mid D] \le |D|^2 \underbrace{(\pi_R + \pi_H \rho_H + \pi_C \rho_C)}_{\rho(D)} + \pi_R B_r.
		\]
		Since \( \rho_H, \rho_C < 1 \) and \( \pi_R + \pi_H + \pi_C = 1 \), the term \( \rho(D) \) will be strictly less than 1 as long as there is a non-zero probability of being a hen or chick (\( \pi_H + \pi_C > 0 \)), which holds in the CSO setting.
		
		Taking the expectation over \( D \) and letting \( \rho = \sup_D \rho(D) < 1 \) and \( B = \sup_D (\pi_R B_r) < \infty \):
		\[
		\mathbb{E}[|D'|^2] = \mathbb{E}_D\bigl[ \mathbb{E}[|D'|^2 \mid D] \bigr] \le \mathbb{E}_D\bigl[ \rho(D) |D|^2 + B \bigr] \le \rho \mathbb{E}[|D|^2] + B.
		\]
		This shows that, under the specified conditions, the CSO-based rejuvenation leads to a global contraction of the second moment of the displacement in this model.
	\end{proof}

	\section{Proof of Majorization Lemma}
	\label{app:histogram}
	
	This appendix provides the proof for Lemma~\ref{lem:majorization}.
	
	\begin{proof}[Proof of Lemma~\ref{lem:majorization}]
		We want to show that if \( p \succ q \), then \( \mathbb{E}[K_p] \le \mathbb{E}[K_q] \). The expected number of occupied bins, given a probability vector \( p = (p_1, \dots, p_m) \), is
		\[
		\mathbb{E}[K_p] = \mathbb{E}\left[\sum_{j=1}^m \mathbf{1}\{N_j > 0\}\right] = \sum_{j=1}^m P(N_j > 0).
		\]
		Given \( N \) samples, the probability of a bin \( j \) being empty is \( (1 - p_j)^N \). Therefore, the probability of it being non-empty is \( 1 - (1 - p_j)^N \).
		\[
		\mathbb{E}[K_p] = \sum_{j=1}^m \Bigl(1 - (1 - p_j)^N\Bigr).
		\]
		Let us define the function \( f_N(x) := 1 - (1 - x)^N \) for \( x \in [0, 1] \). The expectation is simply \( \sum_{j=1}^m f_N(p_j) \).
		
		We check the concavity of \( f_N(x) \):
		\[
		f_N'(x) = N(1 - x)^{N-1},
		\]
		\[
		f_N''(x) = -N(N-1)(1 - x)^{N-2}.
		\]
		For \( N \ge 2 \) and \( x \in [0, 1) \), we have \( f_N''(x) \le 0 \). This means \( f_N(x) \) is a \emph{concave} function on \( [0, 1] \).
		
		Karamata's inequality states that for any concave function \( f \), if a vector \( p \) majorizes a vector \( q \) (i.e., \( p \succ q \)), then:
		\[
		\sum_{j=1}^m f(p_j) \le \sum_{j=1}^m f(q_j).
		\]
		Applying this to our concave function \( f_N \) and our probability vectors \( p \) and \( q \), we get:
		\[
		\sum_{j=1}^m f_N(p_j) \le \sum_{j=1}^m f_N(q_j),
		\]
		which directly translates to:
		\[
		\mathbb{E}[K_p] \le \mathbb{E}[K_q].
		\]
		This completes the proof.
	\end{proof}
	
\end{document}